\def\given{{\hskip1pt|\hskip1pt}}
\def\argmax{\mathop{\rm argmax}}
\def\argmin{\mathop{\rm argmin}}
\theoremstyle{plain}
\newtheorem{theorem}{Theorem}
\newtheorem{definition}{Definition}
\theoremstyle{definition}
\newtheorem{example}{Example}
\begin{document}

\title{Minimax deviation strategies for machine learning and recognition with short learning samples}
\author{Schlesinger M.I. and Vodolazskiy E.V.}

\maketitle

\begin{abstract}%
The article is devoted to the problem of small learning samples in machine learning.
The flaws of maximum likelihood learning and minimax learning are looked into and 
the concept of minimax deviation learning is introduced that is free of those flaws. 
\end{abstract}

\section{Introduction}
The small learning sample problem has been around in machine learning under different names during its whole life.
The learning sample is used to compensate for the lack of knowledge about the recognized object when its statistical model is not completely known.
Naturally, the longer the learning sample is, the better is the subsequent recognition. 
However, when the learning sample becomes too small (2, 3, 5 elements) an effect of small samples becomes evident.
In spite of the fact that any learning sample (even a very small one) provides some additional information about the object,
it may be better to ignore the learning sample than to utilize it with the commonly used methods.
\begin{example}\label{example1}
Let us consider an object that can be in one of two random states $y=1$ and $y=2$ with equal probabilities.
In each state the object generates two independent Gaussian random signals $x_1$ and $x_2$ with variances equal $1$.
Mean values of signals depend on the state as it is shown on Fig. \ref{figure1}. In the first state the mean value is $(2, 0)$.
In the second state the mean value depends on an unknown parameter $\theta$ and is $(0, \theta)$.
Even if no learning sample is given a minimax strategy can be used to make a decision about the state $y$.
The minimax strategy ignores the second signal and makes decision $y^*=1$ when $x_1>1$ and decision $y^*=2$ when $x_1 \leq 1$.

\begin{figure}[h!]
\centering
  \includegraphics*[width=0.5\textwidth]{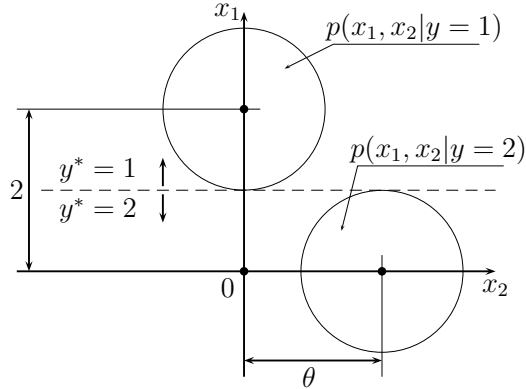}
  \caption{Example \ref{example1}. $(x_1, x_2) \in \mathbb{R}^2$ -- signal, $y \in \{1, 2\}$ -- state.}
  \label{figure1}
\end{figure}

\begin{figure}[h!]
\begin{tabular}{c c}
  \includegraphics*[width=0.5\textwidth]{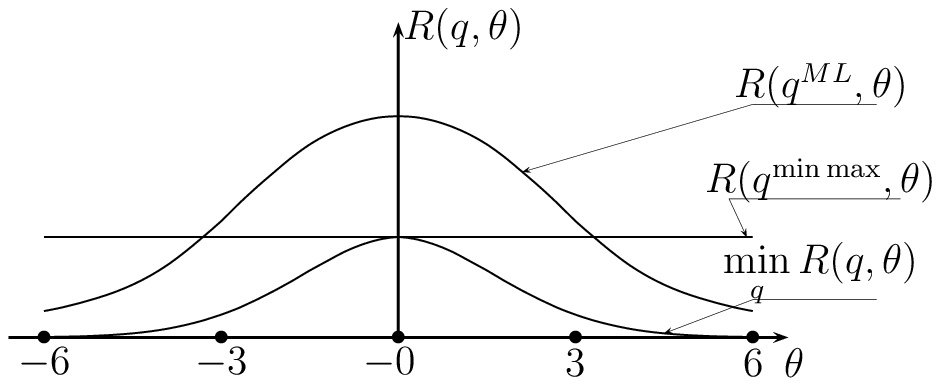} & \includegraphics*[width=0.5\textwidth]{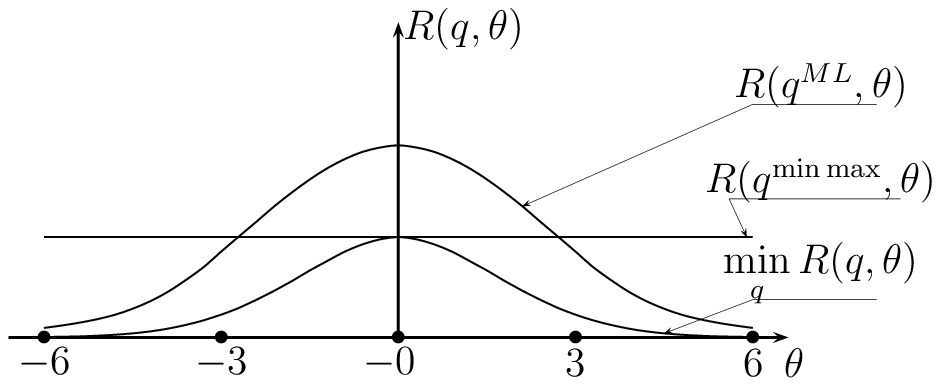} \\
  $n=1$ & $n=2$ \\
  \\
  \includegraphics*[width=0.5\textwidth]{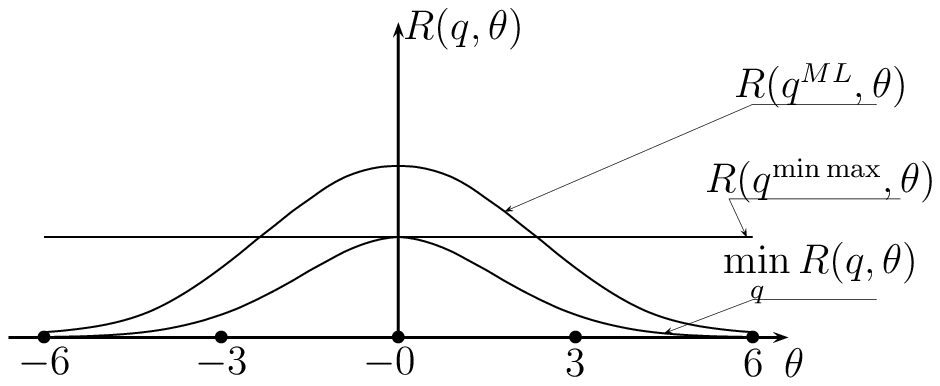} & \includegraphics*[width=0.5\textwidth]{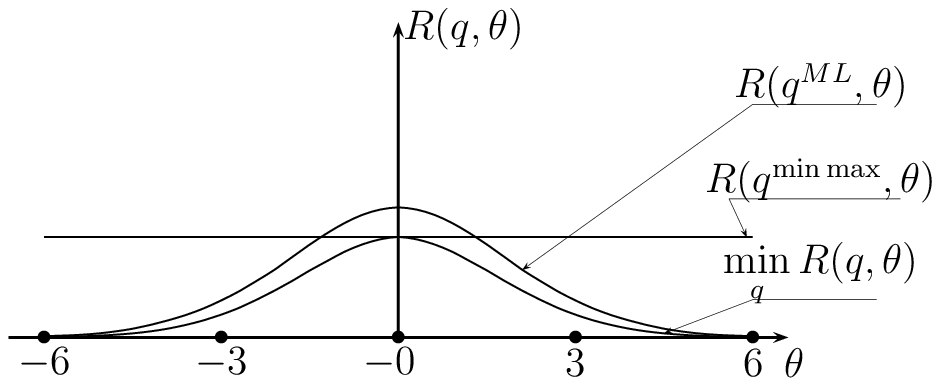} \\
  $n=3$ & $n=10$ 
\end{tabular}
\caption{Example \ref{example1}. Probability of a wrong decision (risk) for different sizes $n$ of the learning sample.
The curve $R(q^{ML},\theta)$ is the risk of a maximum likelihood strategy. The curve $R(q^{minmax},\theta)$ is the risk of a minimax strategy.
The curve $\min\limits_{q}R(q,\theta)$ is the minimum possible risk for each model.}
\label{figure1_exp_a}
\end{figure}

Now let us assume that there is a sample of signals generated by an object in the second state but with higher variance $16$.
A maximum likelihood strategy estimates the unknown parameter $\theta$ and then makes a decision about $y$ as if  
the estimated value of the parameter is its true value. 
Fig. \ref{figure1_exp_a} shows how the probability of a wrong decision (called the risk) depends on parameter $\theta$ for different sizes of the learning sample.
If the learning sample is sufficiently long, the risk of maximum likelihood strategy may become arbitrarily close to the minimum possible risk. 
Naturally, when the length of the sample decreases the risk becomes worse and worse. 
Furthermore, when it becomes as small as 3 or 2 elements the risk of the maximum likelihood strategy 
becomes worse than the risk of the minimax strategy that uses neither the learning sample nor the signal $x_2$ at all.
Hence, it is better to ignore available additional data about the recognized object than to try to make use of it in a conventional way. 
It demonstrates a serious theoretical flaw of commonly used methods, and definitely not that short samples are useless. 
Any learning sample, no mater how long or short it is, provides some, may be not a lot information about the recognized object and a reasonable method has to use it. 
\end{example}

\begin{example}\label{example2}
This is a simple example that has been used by H.Robbins in his seminal article \cite{robbinsAssymptotical} where he initiated empirical Bayessian approach and explaned its main idea.
An object can be in one of two possible states $y=1$ and $y=2$. 
In each state the object generates a univariate Gaussian signal $x$ with variance $1$. 
The mean value of the generated signal depends on the state $y$ so that
$$ p(x \given y=1) = \frac{1}{\sqrt{2\pi}}e^{-\frac{(x+1)^2}{2}}, \quad \quad
p(x \given y=2) = \frac{1}{\sqrt{2\pi}}e^{-\frac{(x-1)^2}{2}}. $$
Only a priori probabilities of states are unknown and $\theta$ is the probability of the first state so that $p(y=1) = \theta$ and $p(y=2) = 1-\theta$.
	\begin{figure}[h!]
		\centering
		\includegraphics*[width=0.5\textwidth]{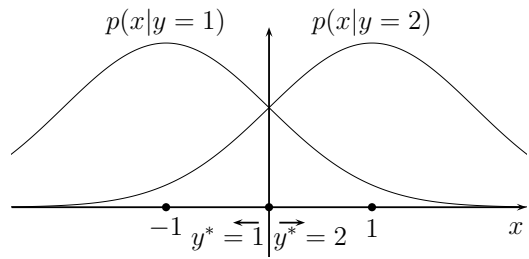}
		\caption{Example \ref{example2}. $x \in \mathbb{R}$ -- signal, $y \in \{1, 2\}$ -- state.}
		\label{figure2}
	\end{figure}
A minimax strategy for such incomplete statistical model makes decision $y^*$ based on the sign of the observed signal 
and ensures probability of correct recognition $0.84$ independently of a priori probabilities of states.

Let not only a single object, but a collection of mutually independant objects be available for recognition. 
Each object is in its own hidden state and is presented with its own signal. 
Let us also assume that the decision about each object's state does not have to be made immediately when the object is observed 
and can be postponed until the whole collection is observed. 
In this case maximum likelihood estimations of a priori probabilities of states can be computed and 
then each object of the collection is recognized as if the estimated values of probabilities were the true values. 
When the presented collection is sufficiently long the probability of a wrong decision can be made as close to the minimum as possible (Fig.\ref{figure2_exp_a}).
However, when the collection is too short, the probability of a wrong decision can be much worse than that of the minimax strategy.
\begin{figure}[h!]
\begin{tabular}{c c}
  \includegraphics*[width=0.5\textwidth]{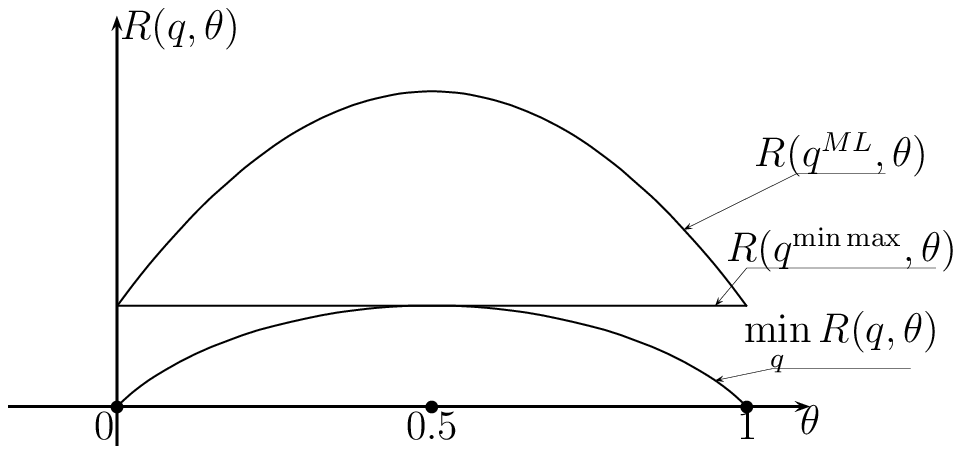} & \includegraphics*[width=0.5\textwidth]{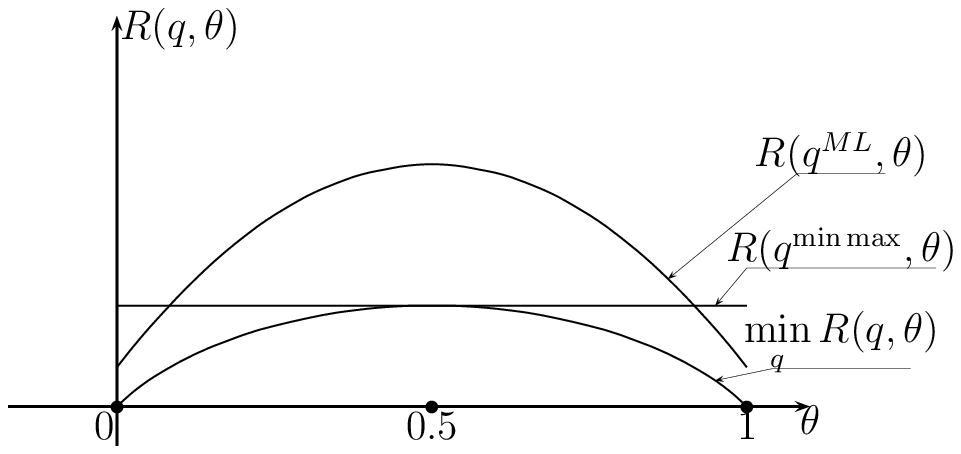} \\
  $n=1$ & $n=2$ \\
  \\
  \includegraphics*[width=0.5\textwidth]{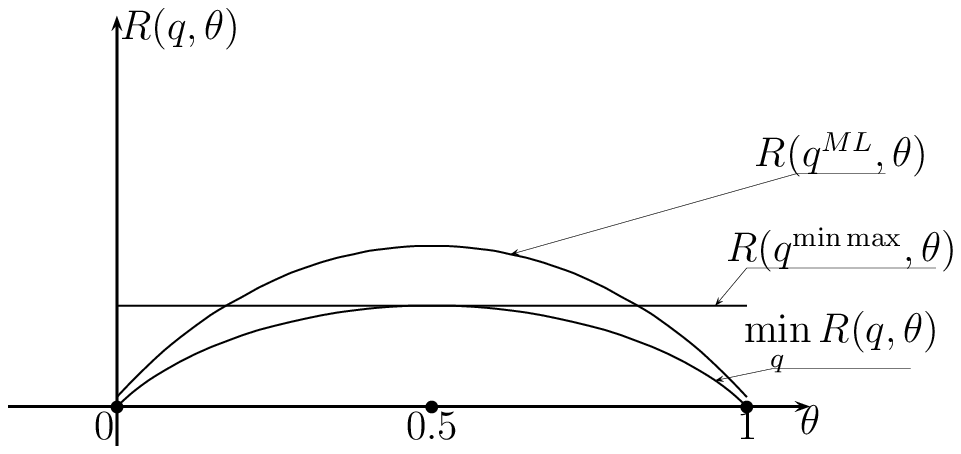} & \includegraphics*[width=0.5\textwidth]{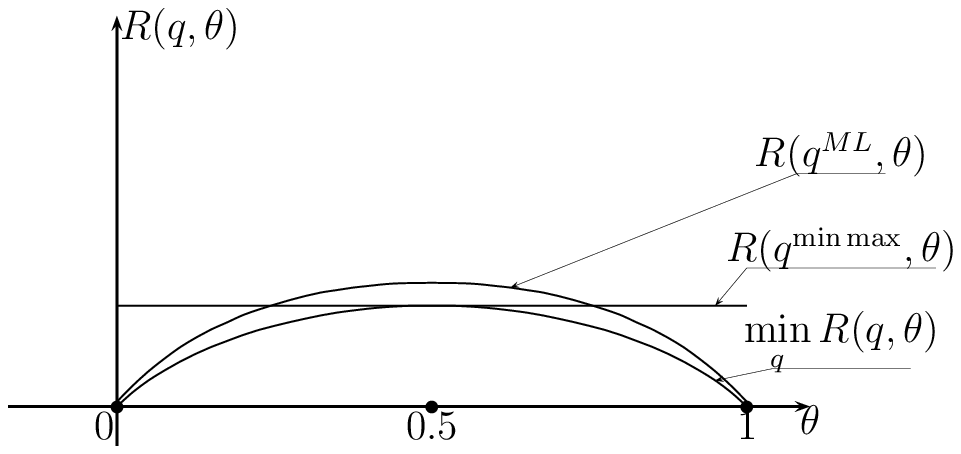} \\
  $n=5$ & $n=10$ 
\end{tabular}
\caption{Example \ref{example2}. Probability of a wrong decision (risk) for different sizes $n$ of the learning sample.
The curve $R(q^{ML},\theta)$ shows the risk of a maximum likelihood strategy, $R(q^{minmax},\theta)$ is the risk of a minimax strategy, $\min\limits_{q}R(q,\theta)$ is the minimal possible risk.}
\label{figure2_exp_a}
\end{figure}
\end{example}

The considered examples lead to a difficult and up to now an unanswered question. 
What should be done when a fixed sample of 2-3 elements is given and no additional elements can be obtained? 
Is it really the best way to ignore these data or is it possible to make use of them? 
We want to fill up this gap between maximum likelihood and minimax strategies and develop a strategy that covers teh whole range of learning samples lengths including zero length. 
However, this gap, and it is infact a gap, shows a theoretical imperfection of the commonly used learning procedures, namely, of maximum likelihood learning. 
The short sample problem in whole follows from the fact that maximum likelihood learning as well as many other learning procedures
have not been deduced from any explicit risk-oriented requirement to the quality of post-learning recognition. 
We will formulate such risk-oriented requirements a priori and will see what type of learning procedures follow.



\section{Basic definitions}
\begin{definition}\label{objectDef}
An object is represented with a tuple $$\big\langle X,  Y, \Theta, \; p_{XY}: X \times Y \times \Theta \rightarrow \mathbb{R} \big\rangle$$ where\\
$X$ is a finite set of signal values $x \in X$;\\
$Y$ is a finite set of states $y \in Y$;\\
$\Theta$ is a finite set of models $\theta \in \Theta$;\\
$p_{XY} {(x,y;\theta)}$ is a probability of a pair $(x \in X, y \in Y)$ for a model $\theta \in \Theta$.
\end{definition}
A signal $x$ is an observable parameter of recognized object whereas a state $y$ is its hidden parameter. 
A pair $(x,y)$ is random and for each pair $(x \in X, y \in Y)$ its probability $p_{XY} {(x,y;\theta)}$ exists. 
However, this probability is not known because it depends on an unknown model $\theta$. 
As for the model $\theta$ it is not random, it takes a fixed but unknown value. 
Only the set $\Theta$ is known that the value $\theta$ belongs to. 
 
Let $z$ be some random data that depend on a model $\theta$ and take values from a finite set $Z$. 
The data is specified with a tuple $\big\langle Z, \; p_Z:Z\times \Theta \rightarrow \mathbb{R} \big\rangle$ where $p_Z (z; \theta)$ is a probability of data 
$z \in Z$ for model $\theta \in \Theta$.  
\begin{definition}\label{learningDataDef}
A random data $\big\langle Z, \; p_Z:Z\times \Theta \rightarrow \mathbb{R} \big\rangle$ that depends on a model is called a learning data 
for an object $\big\langle X,  Y, \Theta, \; p_{XY}: X \times Y \times \Theta \rightarrow \mathbb{R} \big\rangle$ if 
$p_{XYZ}(x,y,z; \theta)=p_{XY} {(x,y; \theta)} \cdot p_Z (z; \theta) \text{ for all } x \in X, y \in Y, z \in Z, \theta \in \Theta.$
\end{definition}
A learning sample $((x_i,y_i)|i=1,2, \dots , n)$ used for supervised learning is a special cases of learning data when 
$$Z=(X\times Y)^n \text{ and } p_Z(z; \theta) = \prod_{i=1}^n p_{XY}(x_i, y_i; \theta).$$  
A learning sample $(x_i|i=1,2, \dots , n)$ for unsupervised learning is another special case of learning data when 
$$Z=X^n \text{ and } p_Z(z; \theta) = \prod_{i=1}^n \sum_{y \in Y} p_{XY}(x_i, y; \theta).$$
Any expert knowledge about the true model is also learning data. 
One can even consider the case when $|Z|=1$ and therefore $p_Z(z; \theta) = 1$, which is equivalent to the absence of any learning data at all. 
We do not restrict the learning data in any way except that for any fixed model the learning data $z$ depend neither on the current signal $x$ nor on the current state $y$ so that
$$p_{XYZ}(x,y,z; \theta)=p_{XY} {(x,y; \theta)} \cdot p_Z (z; \theta) \text{ for all } x \in X, y \in Y, z \in Z, \theta \in \Theta.$$
\begin{definition}
    A non-negative function $q:X\times Y \times Z \to \mathbb{R}$ is called a strategy if $\sum_{y \in Y}q(y\given x,z)=1$ for all $x \in X $, $z \in Z$. 
    \end{definition}
Value $q(y\given x,z)$ of a strategy $q:X\times Y \times Z \to \mathbb{R}$ is a probability of a randomized decision that the current state of an object is $y$, 
given the current observed signal $x$ and the available learning data $z$. The set of all strategies $q:X\times Y \times Z \to \mathbb{R}$ is denoted $Q$.

Let $\omega:Y \times Y$ be a loss function whose value $\omega(y, y')$ is the loss of a decision $y'$ when the true state is $y$.

\begin{definition}\label{riskDef}
Risk $R (q, \theta )$ of a strategy $q$ on a model $\theta$ is expected loss
$$ 
R (q, \theta ) =
\sum\limits_{z \in Z}
\sum\limits_{x \in X} \sum\limits_{y  \in Y}^{}p_{XY}(x, y; \theta )  p_Z(z; \theta)
\sum\limits_{y' \in Y } q(y'|x, z) \omega(y,y').
$$
\end{definition}
Let us be reminded that throughout the paper the sets $X$, $Y$, $Z$ and $\Theta$ are assumed to be finite.
This allows a much more transparent formulation of main results. 
Allowing some of the sets to be infinite would require finer mathematical tools and the results might be obscured by unnecessary technical details. \\

\section{Improper and Bayesian strategies.}

One can see that the risk of a strategy depends not only on the strategy itself but also on the model that the strategy is applied to. 
Therefore, in a general case it is not possible to prefer some strategy $q_1$ to another strategy $q_2$. The risk of $q_1$ may be better than the risk of $q_2$ on some models and worse on the others.
However, it is possible to prefer strategy $q_2$ to strategy $q_1$ if the risk of $q_1$ is greater than the risk of $q_2$ on all models. 
In this case we will say that $q_2$ dominates $q_1$ and $q_1$ is dominated by $q_2$.

\begin{definition}\label{improperDef}
    A strategy $q^0$ is called improper if a strategy $q^*$ exists such that 
    $ R (q^0, \theta ) > R (q^*, \theta ) \quad $ for all $\theta \in \Theta $. 
\end{definition}

We want to exclude all improper from consideration strategies and derive a common form of all the rest.
Let $T$ denote the set of all non-negative functions $\tau:\Theta \to \mathbb{R}$ such that  $\sum\limits_{\theta \in \Theta} \tau(\theta) =1$. 
Functions of such type will be refferred to as weight functions.
\begin{definition}\label{bayessianDef}
    A strategy $q^*$ is called Bayesian if there exists a weight function 
    $\tau \in T$ such that
    $$
        q^*= \arg \min \limits_{q \in Q} \sum\limits_{\theta \in \Theta}{\tau(\theta) R(q, \theta) }.
    $$
\end{definition}

\begin{theorem}\label{dichotomyThrm}
Each strategy $q^0 \in Q$ is either Bayesian or improper, but never both.
\end{theorem}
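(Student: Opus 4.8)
The plan is to recast the two notions geometrically, in the space of achievable risk vectors. Since the risk $R(q,\theta)$ of Definition~\ref{riskDef} is a linear function of $q$ for each fixed $\theta$, and since the strategy set $Q$ is a convex compact polytope (a product of probability simplices, one for each pair $(x,z)$), the image
$$\mathcal{R} = \big\{ \big(R(q,\theta)\big)_{\theta \in \Theta} : q \in Q \big\} \subseteq \mathbb{R}^{\Theta}$$
is a convex compact polytope. I would first translate both notions into statements about the point $r^0 = (R(q^0,\theta))_{\theta}\in \mathcal{R}$: the strategy $q^0$ is improper exactly when some point of $\mathcal{R}$ lies strictly below $r^0$ in every coordinate, and it is Bayesian exactly when $r^0$ minimizes a linear functional $r \mapsto \sum_\theta \tau(\theta)\, r_\theta$ with $\tau \in T$ over $\mathcal{R}$, i.e.\ when $\mathcal{R}$ has a supporting hyperplane at $r^0$ with non-negative normal.

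The easy half, that no strategy is both, is a direct inequality. Suppose $q^0$ is Bayesian with weight $\tau \in T$ and, for contradiction, also improper, witnessed by some $q^*$ with $R(q^*,\theta) < R(q^0,\theta)$ for all $\theta$. Multiplying by $\tau(\theta)\ge 0$ and summing, and using that $\sum_\theta \tau(\theta)=1$ forces at least one weight to be strictly positive, I obtain $\sum_\theta \tau(\theta) R(q^*,\theta) < \sum_\theta \tau(\theta) R(q^0,\theta)$, which contradicts $q^0$ minimizing the weighted risk.

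For the remaining half I would show that every strategy which is not improper is Bayesian. Let $q^0$ be not improper and set $r^0$ as above. Then the open set $O = \{ r \in \mathbb{R}^{\Theta} : r_\theta < r^0_\theta \text{ for all } \theta \}$ is disjoint from $\mathcal{R}$, since a point of $\mathcal{R}\cap O$ would be the risk vector of a strategy witnessing improperness. As $\mathcal{R}$ and $O$ are disjoint convex sets with $O$ open, the separating hyperplane theorem yields a nonzero $\tau \in \mathbb{R}^{\Theta}$ and a constant $c$ with $\langle \tau, r \rangle \ge c$ for all $r \in \mathcal{R}$ and $\langle \tau, r' \rangle \le c$ for all $r' \in O$.

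Finally I would extract the two structural facts that close the argument. First, $\tau \ge 0$ componentwise: if some $\tau_{\theta_0} < 0$, then sending $r'_{\theta_0}\to -\infty$ within $O$ drives $\langle \tau, r'\rangle \to +\infty$, violating the bound by $c$. Second, since $\tau \ge 0$ the supremum of $\langle \tau, \cdot\rangle$ over $O$ equals $\langle \tau, r^0\rangle$, so $\langle \tau, r^0\rangle \le c \le \langle \tau, r\rangle$ for every $r \in \mathcal{R}$; because $r^0 \in \mathcal{R}$ this forces equality, hence $r^0$ minimizes the functional over $\mathcal{R}$. Normalizing $\tau$ by its positive sum (possible as $\tau \ge 0$, $\tau \ne 0$) places it in $T$ without changing the minimizer, exhibiting $q^0$ as Bayesian. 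The main obstacle is exactly this separation step: guaranteeing that the separating normal may be chosen non-negative, which is what the unboundedness of $O$ in the negative coordinate directions buys, together with the care needed to sharpen the weak separating inequalities into the precise supporting statement at $r^0$.
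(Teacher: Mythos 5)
Your proof is correct, but it follows a genuinely different route from the paper's. The paper works with the bilinear function $F(\tau,q)=\sum_{\theta}\tau(\theta)\bigl[R(q,\theta)-R(q^0,\theta)\bigr]$ on $T\times Q$ and invokes a minimax (saddle-point) theorem: the sign of the saddle value $F(\tau^*,q^*)\le 0$ decides the dichotomy, with $F(\tau^*,q^*)=0$ characterizing Bayesian strategies and $F(\tau^*,q^*)<0$ characterizing improper ones, each direction checked in a separate proposition. You instead pass to the risk set $\mathcal{R}\subseteq\mathbb{R}^{\Theta}$ and run the classical complete-class argument of statistical decision theory: non-improperness means $\mathcal{R}$ misses the open orthant $O$ strictly dominating $r^0$, a separating hyperplane between these two convex sets has non-negative normal (forced by the unboundedness of $O$ in the negative coordinate directions), and sharpening the weak separation at $r^0$ exhibits the Bayesian weight vector after normalization. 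Your key steps are all sound: the identification of Bayesianness with $r^0$ minimizing a non-negative linear functional over $\mathcal{R}$ is legitimate because the weighted risk depends on $q$ only through its risk vector, the exclusivity half is the same one-line inequality the paper uses inside Proposition 1, and the supremum computation $\sup_{r'\in O}\langle\tau,r'\rangle=\langle\tau,r^0\rangle$ is valid for $\tau\ge 0$. What each approach buys: the paper's saddle-point formulation handles both implications symmetrically within one framework and mirrors the construction it later reuses for minimax deviation strategies (where $\tau^*$ is obtained as the maximizing weight function); your argument is more elementary and self-contained, needing only finite-dimensional hyperplane separation rather than a full minimax theorem, and it makes geometrically transparent exactly where the non-negativity of the weights comes from --- a point the paper's proof leaves buried inside the duality citation.
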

\begin{proof}
For a given strategy $q^0$ let us define a function $F{:}\ T \times Q \rightarrow \mathbb{R}$,
	\begin{equation}\nonumber
		F(\tau, q)= \sum_{\theta \in \Theta} \tau(\theta)\big[R(q,\theta)-R(q^0,\theta)\big].
	\end{equation}
According to Definition \ref{riskDef}, for any fixed $\theta$ the risk $R(q,\theta)$ is a linear function of probabilities $q(y\given x, z)$.
Consequently, for any fixed $\tau$ the function $F$ is also a linear function of probabilities $q(y\given x, z)$. 
Similarly, function $F$ is a linear function of weights $\tau(\theta)$ for any fixed strategy $q$.
The set $Q$ of strategies and the set $T$ of weight functions are both closed convex sets.
Consequently, due to the known duality theorem \cite{borwein, boyd, hiriart} function $F$ has a saddle point $(\tau^* \in T, q^* \in Q)$ such that
	\begin{equation} \nonumber
		\max_{\tau \in T} \min_{q \in Q}F(\tau,q) = F(\tau^*,q^*) = \min_{q \in Q}\max_{\tau \in T} F(\tau,q),
	\end{equation}
where 
\begin{equation} \nonumber
		q^*= \argmin_{q \in Q} \max_{\tau \in T} F(\tau,q), \qquad
		\tau^*=\argmax_{\tau \in T} \min_{q \in Q} F(\tau,q).
	\end{equation}
It is obvious that $F(\tau,q^0)=0$ for any $\tau \in T$. 
Therefore, the inequality
$\min\limits_{q \in Q}F(\tau, q) \leq 0$ holds for every $\tau \in T$ and, consequently,
	\begin{equation} \nonumber
		\max_{\tau \in T} \min_{q \in Q}F(\tau,q) = F(\tau^*,q^*) \leq 0 .
	\end{equation}
Therefore, there are two mutually exclusive cases: either $F(\tau^*,q^*) < 0$ or $F(\tau^*,q^*) = 0$. 
In such way the proof of the theorem is reduced to proving the following four propositions:\\
\\
Proposition 1. If the strategy $q^0$ is Bayessian then $F(\tau^*,q^*) = 0$.\\
Proposition 2. If $F(\tau^*,q^*) = 0$ then the strategy $q^0$ is Bayessian.\\
Proposition 3. If the strategy $q^0$ is improper then $F(\tau^*,q^*) < 0$.\\
Proposition 4. If $F(\tau^*,q^*) < 0$ then the strategy $q^0$ is improper.\\
\\
Proof of Proposition 1. If the strategy $q^0$ is Bayessian then according to Definition \ref{bayessianDef} a weight function $\tau^0$ exists such that inequality 
$$\sum_{\theta \in \Theta}\tau^0(\theta) R(q,\theta) \ge \sum_{\theta \in \Theta}\tau^0(\theta) R(q^0,\theta) $$ 
is valid for all $q \in Q$. Consequently, 
for all $q \in Q$ the chain
$$ 0 \le \sum_{\theta \in \Theta}\tau^0(\theta) [R(q,\theta)-R(q^0,\theta)]=F(\tau^0,q) \le \max_{\tau \in T} F(\tau,q)$$
is also valid. 
Since all numbers $\max_{\tau \in T} F(\tau,q)$, $q \in Q$, are not negative the least of them is also not negative and 
$$ \min_{q \in Q} \max_{\tau \in T} F(\tau,q) = F(\tau^*,q^*) \ge 0 $$ 
From this inequality it follows that $F(\tau^*,q^*) = 0 $ because a case $F(\tau^*,q^*) > 0 $ is impossible. 
   \\
\\
Proof of Proposition 2. Let $F(\tau^*,q^*) = 0$ then 
	\begin{align*}
		0 = F(\tau^*,q^*) &= \max_{\tau \in T}\min_{q \in Q} F(\tau, q) 
                          = \min_{q \in Q} F(\tau^*, q) = \\
                &= \min_{q \in Q} \sum_{\theta \in \Theta}\tau^*(\theta)\big[R(q,\theta)-R(q^0,\theta)\big] = \\
                        &= \min_{q \in Q}\biggl[\, \sum_{\theta \in \Theta}\tau^*(\theta)R(q,\theta)\biggr]-\sum_{\theta \in \Theta}\tau  ^*(\theta)R(q^0,\theta).
	\end{align*}
    It implies the equality
    \begin{equation} \nonumber 
		\min_{q \in Q} \sum_{\theta \in \Theta}\tau^*(\theta)R(q,\theta) = \sum_{\theta \in \Theta}\tau^*(\theta)R(q^0,\theta)
	\end{equation}
    and therefore,
    \begin{equation} \nonumber 
		q^0 = \arg\min_{q \in Q} \sum_{\theta \in \Theta}\tau^*(\theta)R(q,\theta),
	\end{equation}
    which means that $q^0$ is Bayesian according to Definition \ref{bayessianDef}.\\
\\
Proof of Proposition 3. If the strategy $q^0$ is improper then according to Definition \ref{improperDef} a strategy $q^1$ exists such that inequality $R(q^1, \theta) < R(q^0, \theta)$ holds for all $\theta$. 
The set of models is finite and therefore, a value $\varepsilon <0 $ exists such that  for any $\theta$ inequality $R(q^1, \theta) - R(q^0, \theta) \le \varepsilon $ holds and a chain
$$ 0 > \varepsilon \ge \sum_{\theta \in \Theta} \tau(\theta) [R(q^1, \theta) - R(q^0, \theta)] = 
F(\tau,q^1)\ge \min_{q \in Q}F(\tau, q)$$ 
is valid for any $\tau \in T$. 
Since all numbers $\min_{q \in Q}F(\tau, q)$, $\tau \in T$, are not greater then $\varepsilon$ the greatest of them is also not greater then $\varepsilon$ and 
$$ \max_{\tau \in T} \min_{q \in Q} F(\tau,q) = F(\tau^*,q^*) \le \varepsilon < 0 .$$\\
\\
Proof of Proposition 4.
Let $F(\tau^*,q^*) < 0$ then 
	\begin{align*}
        F(\tau^*, q^*) &= \min_{q \in Q}\max_{\tau \in T} F(\tau, q) 
                        = \max_{\tau \in T}F(\tau,q^*) =\\
                        &= \max_{\tau \in T} \sum_{\theta \in \Theta} \tau (\theta)\big[R(q^*,\theta)-R(q^0,\theta)\big] 
                        = \max_{\theta \in \Theta}\big[R(q^*,\theta)-R(q^0,\theta)\big]
     \end{align*}
	and therefore
	\begin{equation} \nonumber
		\max\limits_{\theta \in \Theta }\big[R(q^*,\theta)-R(q^0,\theta)\big] < 0.
	\end{equation}
Consequently, the inequality $R(q^*,\theta) < R(q^0,\theta)$ holds for all models $\theta \in \Theta$
and $q^0$ is improper according to Definition \ref{improperDef}.

\end{proof}
The theorem gives good reasons to reappraise lot of well-known methods that are commonly used as something self-evident.
Let us illustrate this criticism with two simple examples. 
The first example considers a certain method of recognition without learning and the second relates to maximum likelihood learning. 
In both examples the loss function is 
$$ \omega(y,y')= \begin{cases} 0, &\mbox{if } y=y',\\ 1, &\mbox{if } y \ne y'. \end{cases}$$  
\begin{example} \label{IncorrectRecognition}
Let $x$ be an image of a letter, $y$ be its name and $\theta$ be a position of the letter in a field of vision. 
Let the function $p_{XY}:X \times Y \times \Theta \to \mathbb{R}$ be constructively defined so that probability $p_{XY}(x,y;\theta)$ can be calculated for each triple $x$, $y$, $\theta$. 
In this case when an image $x$ with an unknown position $\theta$ is observed the decision $y^*(x)$ about the name of the letter has to be of the form
\begin{equation}\label{bayessianDecision1}
y^*(x) = \argmax_{y \in Y} \sum_{\theta \in \Theta} \tau(\theta) p_{XY}(x,y;\theta).
\end{equation}
Theorem  \ref{dichotomyThrm} reveals a certain weakness of the commonly used form
\begin{equation}\label{MLDecision1}
\argmax_{y \in Y} \max_{\theta \in \Theta} p_{XY}(x,y;\theta).
\end{equation}
The strategy (\ref{MLDecision1}) could be represented in the form (\ref{bayessianDecision1}) 
if the weights $\tau(\theta)$ in (\ref{bayessianDecision1}) could be chosen individually for each observation $x \in X$. 
However, each Bayessian strategy is specified with its own weight function $\tau: \Theta \to \mathbb{R}$ so that weights are assigned to elements of the set $\Theta$, 
not of the set $\Theta \times X$. 
As a rule, the strategy (\ref{MLDecision1}) cannot be represented in the form (\ref{bayessianDecision1}) with fixed weights $\tau(\theta)$ that do not depend on $x$. 
It means that the strategy (\ref{MLDecision1}) is not Bayessian and is dominated by some other strategy that for each position of the letter recognizes its name better then strategy (\ref{MLDecision1}).   
\end{example}
\begin{example} \label {Incorrectlearning}
Let the sets $X$, $Y$ and $\Theta$ be specified for the recognized object as well as a function 
$p_{XY}:X \times Y \times \Theta \to R$. Let the learning information be a random learning sample 
$z = ((x_i,y_i)| i=1,2, \dots , n)$ such that
$$ p_Z(z;\theta)= \prod_{i=1}^n p_{XY}(x_i,y_i;\theta).$$
Then the decision $y^*$ about the current state $y_0$ based on the current signal $x_0$ and available learning sample $z$ has to be of the form
\begin{equation}\label{bayessianDecision}
y^*=\arg\max\limits_{y_0 \in Y} \sum\limits_{\theta \in \Theta} \tau(\theta)\prod\limits_{i=0}^{n} {p(x_i,y_i; \theta)}
\end{equation}
for some fixed $\tau$ that does not depend on $z$. One can see that the commonly used maximum likelihood strategy
\begin{equation}\label{MLDecision}
y^*= \arg\max\limits_{y_0}   p(x_0,y_0; \theta^{ML}(z)),
\end{equation}
$$ \theta^{ML}(z)= \arg\max\limits_{ \theta \in \Theta} \prod\limits_{i=1}^{n} p(x_i,y_i; \theta)$$
can almost never be represented in the form (\ref{bayessianDecision}) with constant weights and therefore is not Bayessian. 
It means that some other strategy exists that makes a decision about the current state based both on current signal and learning information and for each model makes it better than strategy (\ref{MLDecision}).
\end{example}
\section{A gap between maximum likelihood and minimax strategies.}
We consider maximum likelihood and minimax strategies and specify a gap between them.

Let us define for each $\theta \in \Theta$ a strategy $q^{opt}(\theta)=\argmin_{q \in Q}R(q,\theta)$ that assigns a probability
$q^{opt}(y\given x,z;\theta)$ for each triple $(x,y,z)$. The strategy $q^{opt}(\theta)$ is
the best possible strategy that should be used if a true model were known. 
Since the model is known no learning data are needed. 
For any fixed model $\theta$ a strategy $q(\theta):X \times Y \times Z \to \mathbb{R}$ can be replaced with a strategy $q_X(\theta):X \times Y \to \mathbb{R}$ with the same risk. 
Probabilities $q(y \given x,z; \theta)$ have to be transformed into probabilities $q_X(y \given x; \theta)$ according to expression 
$$q_{X}(y\given x; \theta) = \sum_{z \in Z}p_Z(z;\theta) q(y \given x, z ;\theta) $$ 
and so the chain 
\begin {equation} \nonumber
R (q, \theta ) =
\sum\limits_{z \in Z}\sum\limits_{x \in X} \sum\limits_{y  \in Y}p_{XY}(x, y; \theta )  p_Z(z; \theta)
\sum\limits_{y' \in Y } q(y'|x, z;\theta) \omega(y,y')=
\end{equation}
$$ 
= \sum\limits_{x \in X} \sum\limits_{y  \in Y}p_{XY}(x, y; \theta )
\sum\limits_{y' \in Y }  \omega(y,y') \sum\limits_{z \in Z} p_Z(z; \theta)q(y'|x, z;\theta)=
$$
$$ 
= \sum\limits_{x \in X} \sum\limits_{y  \in Y}p_{XY}(x, y; \theta )
\sum\limits_{y' \in Y } q_X(y'|x;\theta) \omega(y,y')=R (q_X, \theta ).
$$ 
is valid for each model $\theta$.
Consequently, for each $\theta$ the equality 
\begin {equation} \label {Reduction}
\min_{q \in Q} R(q, \theta) = \min_{q_X \in Q_X} R(q_X, \theta)
\end{equation}
is valid. The symbol $Q_X$ in (\ref {Reduction}) designates a set of all strategies of the form $q_X:X \times Y \to R $ that do not use the learning data.
\begin{definition} \label{MLDefinition}
A strategy $q^{ML}:X \times Y \times Z \to \mathbb{R}$ is called a maximum likelihood strategy if for each triple $(x,y,z)$ it specifies a probability
$$q^{ML}(y \given x, z)=q_X^{opt}(x\given y; \theta^{ML}(z)),$$ 
$$ \text{ where } q_X^{opt}(\theta)= \argmin_{q_X \in Q_X}R(q_X, \theta) \text {  and  } \theta^{ML}(z)=\argmax_{\theta \in \Theta}p_Z(z; \theta).$$
\end{definition}
In other words, maximum likelihood strategies use the learning data $z$ to estimate a model $\theta$ 
and make a decision that minimizes the expected loss with an assumption that the estimated model is the true model.

As it has been quoted for Examples \ref{IncorrectRecognition} and \ref {Incorrectlearning}, as a rule, maximum likelihood strategies cannot be represented in a form of a Bayessian strategy
$$ q^B = \argmin_{q \in Q} \sum_{\theta \in \Theta} \tau(\theta)R(q,\theta) $$
with fixed weights $\tau(\theta)$ that do not depend on the learning data. 
In such cases the maximum likelihood strategy $q^{ML}$ may be dominated with another strategy of the form $X \times Y \times Z \to \mathbb{R}$. 
Minimax strategies are free of this flaw.   
\begin{definition}
Strategy $\argmin\limits_{q \in Q} \max\limits_{\theta \in \Theta}R(q, \theta)$ is called a minimax strategy.
\end{definition}
\begin{theorem}\label{MinimaxIsNotImproper}
No minimax strategy is improper.
\begin{proof}
Let us prove an equivalent statement that any improper strategy $q^0$ is not minimax. 
Indeed, as far as $q^0$ is improper another strategy $q^1$ exists such that $R(q^1,\theta) < R(q^0,\theta)$ for all $\theta$. 
Therefore, $\max_{\theta}R(q^1,\theta) < \max_\theta R(q^0,\theta)$ and $\min_q\max_{\theta}R(q,\theta) < \max_\theta R(q^0,\theta)$ and $q^0$ is not $\argmin_q \max_\theta R(q^0,\theta)$.
\end{proof}
\end{theorem}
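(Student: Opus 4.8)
\emph{The plan} is to prove the logically equivalent contrapositive: every improper strategy is \emph{not} minimax. This is the more convenient direction because Definition~\ref{improperDef} immediately hands us an explicit dominating strategy to compare against, whereas a direct assault on the defining minimization $\argmin_{q \in Q} \max_{\theta \in \Theta} R(q,\theta)$ would be clumsy.

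First I would fix an arbitrary improper strategy $q^0$ and invoke Definition~\ref{improperDef} to obtain a strategy $q^1$ satisfying $R(q^1,\theta) < R(q^0,\theta)$ for all $\theta \in \Theta$. The entire task then reduces to promoting this pointwise-in-$\theta$ strict domination to a strict inequality between the two worst-case risks, namely $\max_{\theta} R(q^1,\theta) < \max_{\theta} R(q^0,\theta)$.

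The one step that carries any subtlety is precisely this promotion through the maximum, and it is exactly where the standing finiteness of $\Theta$ does the work. Because $\Theta$ is finite, the maximum $\max_{\theta} R(q^1,\theta)$ is attained at some $\bar\theta \in \Theta$; at that point $\max_{\theta} R(q^1,\theta) = R(q^1,\bar\theta) < R(q^0,\bar\theta) \le \max_{\theta} R(q^0,\theta)$, so the strict gap survives. From here the conclusion is immediate: $\min_{q \in Q} \max_{\theta} R(q,\theta) \le \max_{\theta} R(q^1,\theta) < \max_{\theta} R(q^0,\theta)$, so $q^0$ does not attain the minimax value and is therefore not a minimax strategy.

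\emph{The main obstacle} is thus nothing more than guaranteeing strictness of the inequality after maximizing, which is trivial here thanks to the attainment of maxima on a finite $\Theta$. Were $\Theta$ allowed to be infinite, one would instead need a uniform gap $R(q^1,\theta) - R(q^0,\theta) \le \varepsilon < 0$ together with a compactness or continuity assumption to recover the same conclusion; this is precisely the $\varepsilon$-device already used in the proof of Proposition~3 above.
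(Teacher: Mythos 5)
Your proposal is correct and takes essentially the same route as the paper's proof: the contrapositive, the dominating strategy $q^1$ from Definition~\ref{improperDef}, and the chain $\min_{q}\max_{\theta}R(q,\theta) \le \max_{\theta}R(q^1,\theta) < \max_{\theta}R(q^0,\theta)$. The only difference is that you spell out the step the paper states with a bare ``Therefore'' --- that finiteness of $\Theta$ lets the strict pointwise inequality survive the maximization --- which is a welcome clarification, not a deviation.
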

Though maximum likelihood strategy may be improper whereas minimax strategy is never improper the first one has an essential advantage over the second. 
There is a rather wide class of learning data such that the maximum likelihood strategy is in a sense consistent for any recognized object 
whereas there is a rather wide class of recognized objects such that the minimax strategy is not consistent for any learning data. 
Let us exactly formulate these statements and prove them.\\
\\
Let $z \in Z$ be a random variable that depends on model $\theta $ and let for each $z \in Z$ and $\theta \in \Theta$ a probabillity 
$p_Z(z; \theta)$ be given. We will say that this dependence is essential if for each two different models 
$\theta_1 \ne \theta_2$ a value $z^*$ exists such that $p_Z(z^*;\theta_1) \ne p_Z(z^*;\theta_2)$. 
Let $z^n =(z_i|i=1,2, \dots ,n) \in Z^n$ be a learning sample, $p_{Z^n}(z^n; \theta^*) = \prod_{i=1}^np_Z(z_i;\theta^*)$ be a probability of a sample 
and $\theta^{ML}(z^n)=\argmax_{\theta}p_{Z^n}(z^n;\theta)$ be a maximum likelihood estimation of the model.

Consistency is a generally known property of maximum likelihood estimate. 
In the considered case this property may be formulated in a simple way that the probability of inequality $\theta^{ML}(z^n) \ne \theta^*$ converges to zero when $n$ increases or, formally,
\begin {equation} \label {GreatNumbersLaw}
\lim_{n \rightarrow \infty} \sum_{z^n \in Z_{err}^n} \prod_{i=1}^n p_Z ( z_i; \theta^*) = 0
\end {equation}      
where
\begin {equation} \label {ErrorSet}
Z_{err}^n = \{z^n \in Z^n| \theta^{ML}(z^n) \ne \theta^* \}.
\end {equation}
The consistency of a maximum likelihood estimations is a base for a proof of the following theorem about consistency of maximum likelihood strategy.
\begin{theorem}\label{MLConsistencyThrm}
Let $z$ be random variable that takes values from a set $Z$ according to probability distribution $p_Z(z;\theta)$ that essentially depends on $\theta$; \\
\\
let $n$ be a positive integer and $z^n =(z_i| i=1,2, \dots , n) \in Z^n$ be a random learning sample with probability distribution $p_{Z^n}(z^n; \theta) = \prod_{i=1}^n p_{Z}(z_i; \theta)$;\\
\\
let $q^{ML}_n:X \times Y \times Z^n \to \mathbb{R}$ be a maximum likelihood strategy for an object
$ \langle X,Y,\Theta, p_{XY}:X \times Y \times \Theta \rightarrow \mathbb{R} \rangle$ and learning data $\langle Z^n, p_{Z^n}: Z^n \times \Theta \to \mathbb{R} \rangle $.\\
\\
Then
$$\lim_{n \rightarrow \infty}\max_{\theta \in \Theta}\big[ R(q^{ML}_n, \theta) - \min_{q \in Q} R(q, \theta) \big]=0.$$
\end{theorem}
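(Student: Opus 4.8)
The plan is to express the risk $R(q^{ML}_n,\theta)$ of the maximum likelihood strategy on each model $\theta$ as an average, over the random learning sample $z^n$, of the risks incurred by the learning-free strategies $q_X^{opt}(\theta^{ML}(z^n))$, and then to show that as $n$ grows this average is dominated by the term in which the estimated model coincides with the true one, whose value is exactly $\min_{q\in Q} R(q,\theta)$.

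First I would fix a true model $\theta \in \Theta$ and, using Definition \ref{MLDefinition} together with the linearity of the risk in the strategy, rewrite
$$R(q^{ML}_n,\theta) = \sum_{z^n \in Z^n} p_{Z^n}(z^n;\theta)\, R\big(q_X^{opt}(\theta^{ML}(z^n)),\,\theta\big).$$
This uses the fact that $q^{ML}_n(y \given x,z^n)$ equals $q_X^{opt}(y \given x;\theta^{ML}(z^n))$, so that once $z^n$ is fixed the inner expectation over $(x,y)$ is precisely the risk on $\theta$ of the learning-free strategy determined by the estimate $\theta^{ML}(z^n)$.

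Next I would split the sum over $z^n$ according to whether $z^n \in Z^n_{err}$ (see (\ref{ErrorSet})) or not. On the complement of $Z^n_{err}$ one has $\theta^{ML}(z^n)=\theta$, hence $R(q_X^{opt}(\theta),\theta) = \min_{q_X \in Q_X} R(q_X,\theta) = \min_{q\in Q} R(q,\theta)$ by (\ref{Reduction}); this part contributes $(1-P_n(\theta))\min_{q\in Q} R(q,\theta)$, where $P_n(\theta) = \sum_{z^n\in Z^n_{err}} p_{Z^n}(z^n;\theta)$ is the error probability. On $Z^n_{err}$ each summand $R(q_X^{opt}(\theta^{ML}(z^n)),\theta)$ is bounded by $W = \max_{y,y'}\omega(y,y')$, since $\sum_{y'} q_X^{opt}(y' \given x;\cdot)=1$ and $\sum_{x,y} p_{XY}(x,y;\theta)=1$, so that part contributes at most $W\,P_n(\theta)$. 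Combining these, and using $R(q^{ML}_n,\theta)\ge \min_{q\in Q} R(q,\theta)$ because $q^{ML}_n\in Q$, I obtain
$$0 \le R(q^{ML}_n,\theta) - \min_{q\in Q} R(q,\theta) \le P_n(\theta)\,\big(W - \min_{q\in Q} R(q,\theta)\big) \le W\,P_n(\theta).$$

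Finally, the essential dependence of $p_Z(\cdot;\theta)$ on $\theta$ is exactly the identifiability hypothesis under which the maximum likelihood estimate is consistent, so (\ref{GreatNumbersLaw}) applied with each true model $\theta$ gives $P_n(\theta)\to 0$ as $n\to\infty$. Since $\Theta$ is finite, the maximum over $\theta$ of finitely many bounds $W\,P_n(\theta)$, each tending to zero, also tends to zero, which yields the claim. The main obstacle is the decomposition step: one must recognize that fixing $z^n$ collapses $q^{ML}_n$ into a learning-free strategy whose risk on $\theta$ is controlled, and must invoke the boundedness of $\omega$ together with the finiteness of $\Theta$ to promote the per-model consistency (\ref{GreatNumbersLaw}) to the uniform-in-$\theta$ convergence demanded by the statement.
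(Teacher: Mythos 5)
Your proof is correct and follows essentially the same route as the paper's: write $R(q^{ML}_n,\theta)$ as an expectation over $z^n$ of risks of learning-free strategies, split the sum according to the error set $Z_{err}^n$ of (\ref{ErrorSet}), bound the error-set contribution by the bounded loss, invoke consistency (\ref{GreatNumbersLaw}) of the maximum likelihood estimate, and use finiteness of $\Theta$ to pass from per-model to uniform convergence. In fact your decomposition $R(q^{ML}_n,\theta)=\sum_{z^n}p_{Z^n}(z^n;\theta)\,R\big(q_X^{opt}(\theta^{ML}(z^n)),\theta\big)$ is the correct form of the identity the paper asserts: the paper writes the summand as $\min_{q_X\in Q_X}R(q_X,\theta^{ML}(z^n))$, i.e.\ the risk evaluated at the \emph{estimated} model rather than at the true one, which is a slip (the two expressions agree outside $Z_{err}^n$, so the overall argument is unaffected).
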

\begin {proof}
As far as a set $\Theta$ is finite the proof of the theorem is reduced to proof of the equality 
\begin{equation} \label{WeekConsistency}
\lim_{n \rightarrow \infty}\big[ R(q^{ML}_n, \theta) - \min_{q \in Q} R(q, \theta) \big]=0
\end{equation}
for any $\theta$. The subsequent proof is based on equality (\ref {Reduction}), on equalities (\ref {GreatNumbersLaw}) and 
(\ref {ErrorSet}) that express consistency of maximum likelihood estimates and
on equality
$$R(q_n^{ML},\theta)=\sum_{z^n \in Z^n}p_{Z^n}(z^n;\theta)\min_{q_X \in Q_X}R(q_X,\theta^{ML}(z^n)),$$ 
$$ \text {   where   }  \theta^{ML}(z^n)=\argmax_{\theta \in \Theta}p_{Z^n}(z^n;\theta),$$
that follows from Definition \ref{MLDefinition}.
The following chain is valid:
\begin{multline*}
\lim_{n \rightarrow \infty}[R(q_n^{ML},\theta) - \min_{q \in Q}R(q,\theta) ] = 
\lim_{n \rightarrow \infty}[R(q_n^{ML},\theta) - \min_{q_X \in Q_X}R(q_X,\theta) ] = \\
\begin{aligned}
&=\lim_{n \rightarrow \infty}[\sum_{z_n \in Z^n}p_{Z^n}(z^n;\theta)
\min_{q_X \in Q_X}R(q_X,\theta^{ML}(z^n)) - \min_{q_X \in Q_X}R(q_X,\theta) ] \\
&=\lim_{n \rightarrow \infty}\sum_{z_n \in Z^n}p_{Z^n}(z^n;\theta)
[\min_{q_X \in Q_X}R(q_X,\theta^{ML}(z^n)) - \min_{q_X \in Q_X}R(q_X,\theta) ] \\
&=\lim_{n \rightarrow \infty}\sum_{ z^n \in Z_{err}^n}p_{Z^n}(z^n;\theta)
[\min_{q_X \in Q_X}R(q_X,\theta^{ML}(z^n)) - \min_{q_X \in Q_X}R_X(q_X,\theta) ] \\
&\leq\lim_{n \rightarrow \infty}\sum_{z^n \in Z_{err}^n}
p_{Z^n}(z^n;\theta)
[ \max_{y \in Y}\max_{y' \in Y} w(y,y') - \min_{y \in Y}\min_{y' \in Y} w(y,y')] \\
&=\lim_{n \rightarrow \infty}\{[ \max_{y \in Y}\max_{y' \in Y} w(y,y') - \min_{y \in Y}\min_{y' \in Y} w(y,y')]
\sum_{z^n \in Z_{err}^n}p_{Z^n}(z^n;\theta)\}\\
&=[ \max_{y \in Y}\max_{y' \in Y} w(y,y') - \min_{y \in Y}\min_{y' \in Y} w(y,y')]\lim_{n \rightarrow \infty} 
\sum_{z^n \in Z_{err}^n}p_{Z^n}(z^n;\theta) =0.
\end{aligned}
\end{multline*}
It follows from a chain that for any $\theta$ an inequality
$$\lim_{n \rightarrow \infty}\big[ R(q^{ML}_n, \theta) - \min_{q \in Q} R(q, \theta) \big] \le 0$$ 
holds. The difference $R(q^{ML}_n, \theta) - \min_{q \in Q} R(q, \theta)$ is never negative and so (\ref{WeekConsistency}) is proved.  
\end {proof}
So, with the increasing length of learning sample the risk function of maximum likelihood strategy becomes arbitrarily close to the minimum possible risk function. Minimax strategy has not this property. Moreover, 
for certain class of objects minimax strategies simply ignore the learning sample, no matter how long it is.
\begin{theorem} \label{ThMinMaxIsBad}
	Let for an object $\left\langle X, Y, \Theta, p_{XY}:X\times Y \times \Theta \rightarrow\mathbb{R}\right\rangle$ a pair $(\theta^*,q_X^*)$ exists such that
	\begin{equation} \nonumber 
		q_X^*= \argmin_{q_X \in Q_X}R(q_X,\theta^*), \quad \theta^*= \argmax_{\theta \in \Theta}R(q_X^*,\theta). 
	\end{equation} 
	Then the inequality
	\begin{equation} \label{MaxMinEqualsMinMax}
		\max_{\theta \in \Theta}R(q,\theta) \geq \max_{\theta \in \Theta} R(q_X^*,\theta)	
	\end{equation}
	is valid for any learning data $\left\langle Z, p_Z:Z \times \Theta \rightarrow \mathbb{R} \right\rangle$
	and any strategy $q:X \times Y \times Z \to \mathbb{R}$.
\end{theorem}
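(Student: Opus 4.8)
The plan is to recognize the hypothesis as a saddle-point condition and then reduce any learning strategy to a non-learning one at the single worst-case model $\theta^*$. The two stated equalities say exactly that $(\theta^*, q_X^*)$ is a saddle point of $R$ on $Q_X \times \Theta$: the condition $q_X^* = \argmin_{q_X \in Q_X} R(q_X, \theta^*)$ gives $R(q_X^*, \theta^*) \le R(q_X, \theta^*)$ for every $q_X \in Q_X$, while $\theta^* = \argmax_{\theta \in \Theta} R(q_X^*, \theta)$ gives $R(q_X^*, \theta) \le R(q_X^*, \theta^*)$ for every $\theta \in \Theta$. In particular the right-hand side of (\ref{MaxMinEqualsMinMax}) satisfies $\max_{\theta \in \Theta} R(q_X^*, \theta) = R(q_X^*, \theta^*) = \min_{q_X \in Q_X} R(q_X, \theta^*)$, so it suffices to prove that $\max_{\theta \in \Theta} R(q, \theta) \ge R(q_X^*, \theta^*)$ for an arbitrary strategy $q \colon X \times Y \times Z \to \mathbb{R}$.

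First I would drop the maximum to the single adversarial model, using $\max_{\theta \in \Theta} R(q, \theta) \ge R(q, \theta^*)$. Next I would apply the reduction established in (\ref{Reduction}) at the model $\theta^*$: setting $\bar q_X(y \given x) = \sum_{z \in Z} p_Z(z; \theta^*)\, q(y \given x, z)$ yields a function that is non-negative and satisfies $\sum_{y \in Y} \bar q_X(y \given x) = 1$ (because $\sum_{z \in Z} p_Z(z; \theta^*) = 1$), hence $\bar q_X \in Q_X$, and the same computation that established (\ref{Reduction}) gives $R(q, \theta^*) = R(\bar q_X, \theta^*)$. Finally, since $\bar q_X$ lies in $Q_X$, the optimality of $q_X^*$ for the model $\theta^*$ gives $R(\bar q_X, \theta^*) \ge \min_{q_X \in Q_X} R(q_X, \theta^*) = R(q_X^*, \theta^*)$. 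Chaining these, $\max_{\theta \in \Theta} R(q, \theta) \ge R(q, \theta^*) = R(\bar q_X, \theta^*) \ge R(q_X^*, \theta^*) = \max_{\theta \in \Theta} R(q_X^*, \theta)$, which is (\ref{MaxMinEqualsMinMax}).

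There is no serious analytic obstacle here; the argument is essentially a packaging of the reduction (\ref{Reduction}) together with the saddle-point identity. The one point that deserves care is that the reduction is model-dependent --- the averaging weights $p_Z(z; \theta^*)$ are those of the fixed model $\theta^*$, so the reduced strategy $\bar q_X$ is tailored to $\theta^*$ and need not reproduce the risk of $q$ at any other model. This is precisely why the argument must be localized at the adversarial model $\theta^*$ rather than applied uniformly: at $\theta^*$ the learning data $z$ is independent of the pair $(x,y)$ and carries a fixed distribution, so averaging it out changes nothing in the risk, and consequently no learning data can push the minimax risk below the value $R(q_X^*, \theta^*)$ already achieved without any learning at all.
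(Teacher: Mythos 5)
Your proof is correct and follows essentially the same route as the paper: both arguments drop the maximum to the adversarial model $\theta^*$, invoke the reduction (\ref{Reduction}) to pass from learning strategies to $Q_X$ at that fixed model, and close the chain with the two saddle-point hypotheses. The only cosmetic difference is that the paper cites (\ref{Reduction}) as the equality $\min_{q \in Q} R(q,\theta^*) = \min_{q_X \in Q_X} R(q_X,\theta^*)$, while you re-derive it inline by constructing the averaged strategy $\bar q_X$ explicitly.
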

\begin{proof} For any strategy $q \in Q$ we have the chain
\begin{equation} \nonumber
\max\limits_{\theta \in \Theta} R(q,\theta) \geq R(q,\theta^*) \ge \min_{q \in Q} R(q,\theta^*)= 
\end{equation}
\begin{equation} \nonumber
=\min_{q_X \in Q_X} R(q_X,\theta^*) = R(q_X^*,\theta^*) = \max_{\theta \in \Theta}R(q_X^*,\theta).
\end{equation} 
\end{proof}
The theorem shows that for some objects the minimax approach is particularly inappropriate because it enforces to ignore any learning data. There is nothing unusual in conditions of the Theorem \ref{ThMinMaxIsBad}. Examples \ref{example1} and \ref{example2} in Introduction show just the cases when these conditions are satisfied.\\
\\
So, there is a following gap between maximum likelihood and minimax strategies. Maximum likelihood strategy may be dominated with other strategy. In this case it can be improved and, consequently, it is not optimal from any point of view. However, for wide class of learning data maximum likelihood strategies are consistent and so their chortage does not become apparent when learning sample of an arbitrary size may be obtained. Cases of learning samples of fixed sizes, especially, short samples form an area of improper application of maximum likelihood strategies. This area is not covered with minimax strategies. Though minimax strategies are dominated with no strategy, for rather wide class of objects minimax requirement enforces to ignore any learning sample, no matter how long it is.  

\section{Minimax deviation strategies.}
This section is aimed at developing a Bayesian consistent strategy that has to fill a gap between maximum likelihood and minimax strategies.
\begin{definition}\label{mindevDef}
A strategy 
$\argmin\limits_{q \in Q}\max\limits_{\theta \in \Theta}\big[R(q, \theta) - \min\limits_{q' \in Q}R(q', \theta)\big]$
is called minimax deviation strategy.
\end{definition}
Minimax deviation strategies do not have the drawback of the minimax strategies.
A theorem that is similar to Theorem \ref{MLConsistencyThrm} for maximum likelihood strategies is also valid for minimax deviation strategies.
\begin{theorem}\label{mindevConsistencyThrm}
Let $z$ be random variable that takes values from a set $Z$ according to probability distribution $p_Z(z;\theta)$ that essentially depends on $\theta$; \\
\\
let $n$ be a positive integer and $z^n =(z_i| i=1,2, \dots , n) \in Z^n$ is a random learning sample with probability distribution $p_{Z^n}(z^n; \theta) = \prod_{i=1}^n p_{Z}(z_i; \theta)$;\\
\\
let $q^*_n:X \times Y \times Z^n \to \mathbb{R}$ be a minimax deviation strategy for an object
$ \langle X,Y,\Theta, p_{XY}:X \times Y \times \Theta \rightarrow \mathbb{R} \rangle$ and learning data $\langle Z^n, p_{Z^n}: Z^n \times \Theta \to \mathbb{R} \rangle $.\\
\\
Then
\begin{equation} \label{ConsistencyOfMindev}
\lim_{n \rightarrow \infty}\max_{\theta \in \Theta}\big[ R(q^*_n, \theta) - \min_{q \in Q} R(q, \theta) \big]=0.
\end{equation}
\end{theorem}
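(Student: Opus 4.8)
The plan is to obtain the result as an almost immediate corollary of Theorem \ref{MLConsistencyThrm}, by exploiting the fact that the minimax deviation strategy is, by its very definition, optimal with respect to the deviation criterion. The essential observation is that the quantity appearing in Theorem \ref{MLConsistencyThrm} for the maximum likelihood strategy is precisely the deviation functional that the minimax deviation strategy is built to minimize, so the two theorems are linked by a single comparison inequality and no new probabilistic estimate is required.

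First I would note that the deviation $R(q,\theta) - \min_{q' \in Q}R(q',\theta)$ is never negative, since $\min_{q' \in Q}R(q',\theta) \le R(q,\theta)$ for every $q \in Q$. Consequently the expression whose limit we must evaluate, namely $\max_{\theta \in \Theta}\big[R(q^*_n,\theta)-\min_{q \in Q}R(q,\theta)\big]$, is non-negative for each $n$, and it therefore suffices to prove that its limit superior does not exceed zero.

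The key step is the comparison with the maximum likelihood strategy. By Definition \ref{mindevDef}, the minimax deviation strategy $q^*_n$ minimizes $\max_{\theta \in \Theta}\big[R(q,\theta)-\min_{q' \in Q}R(q',\theta)\big]$ over all strategies $q$ in the strategy set for the learning data $\langle Z^n, p_{Z^n}\rangle$. The maximum likelihood strategy $q^{ML}_n$ belongs to this same set, so substituting it yields
\begin{equation} \nonumber
\max_{\theta \in \Theta}\big[R(q^*_n,\theta)-\min_{q \in Q}R(q,\theta)\big] \le \max_{\theta \in \Theta}\big[R(q^{ML}_n,\theta)-\min_{q \in Q}R(q,\theta)\big].
\end{equation}
Theorem \ref{MLConsistencyThrm} asserts exactly that the right-hand side tends to zero as $n \to \infty$. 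Combining this with the non-negativity noted above gives, by a squeeze argument, the desired equality (\ref{ConsistencyOfMindev}).

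I expect the only point requiring care, rather than a genuine obstacle, to be the existence of the minimizer $q^*_n$ defining the minimax deviation strategy. Since $X$, $Y$ and $Z^n$ are finite, the strategy set is a compact convex subset of a finite-dimensional space; for each fixed $\theta$ the risk $R(q,\theta)$ is linear in $q$ by Definition \ref{riskDef}, so the deviation is continuous and its maximum over the finite set $\Theta$ is a continuous function attaining its minimum on this compact set. With existence secured, the whole argument collapses to the one-line comparison inequality above, since all the law-of-large-numbers work has already been discharged inside Theorem \ref{MLConsistencyThrm}.
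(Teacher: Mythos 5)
Your proposal is correct and follows essentially the same route as the paper's own proof: compare the minimax deviation strategy against the maximum likelihood strategy via the defining minimality of $q^*_n$, invoke Theorem \ref{MLConsistencyThrm} for the right-hand side, and conclude by non-negativity of the deviation. Your additional remark on the existence of the minimizer (compactness of the strategy set and linearity of the risk) is a small point the paper leaves implicit, but it does not change the argument.
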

\begin{proof}
The Theorem is a straighforward consequence of Definition \ref{mindevDef} and the Theorem \ref{MLConsistencyThrm}. Let $q^{ML}_n$ be a maximum likelihood strategy for an object $ \langle X,Y,\Theta, p_{XY}:X \times Y \times \Theta \rightarrow \mathbb{R} \rangle$ and learning data $\langle Z^n, p_{Z^n}: Z^n \times \Theta \to \mathbb{R} \rangle $. It follows from Definition \ref{mindevDef} that 
$$\max_{\theta \in \Theta}\big[R(q^*_n, \theta) - \min_{q \in Q} R(q, \theta)\big] \leq 
  \max_{\theta \in \Theta}\big[R(q^{ML}_n, \theta) - \min_{q \in Q} R(q, \theta)\big]$$
for any $n$. It follows from Theorem \ref{MLConsistencyThrm} that
$$\lim_{n \rightarrow \infty}\max_{\theta \in \Theta}\big[ R(q^*_n, \theta) - \min_{q \in Q} R(q, \theta) \big] \leq $$
$$\leq \lim_{n \rightarrow \infty}\max_{\theta \in \Theta}\big[ R(q^{ML}_n, \theta) - \min_{q \in Q} R(q, \theta) \big]=0.$$
As far as the difference $[R(q^*_n, \theta) - \min_{q \in Q} R(q, \theta)\big]$ is negative for no model the equality (\ref{ConsistencyOfMindev}) is proved. 
\end{proof}
Let us note that the proof of the Theorem \ref{ConsistencyOfMindev} shows not only a consistency of minimax deviation strategy. It shows also that minimax deviation strategy converges to desired result not slower than maximum likelihood strategy. Similarly, one can show that this advantage of minimax deviation strategy holds as compared with any consistent strategy and from this point of view it is the best of all consistent strategies.\\
\\
Following theorem states that minimax deviation strategies are also inappropriate for recognition of certain type of objects. 
\begin{theorem} \label{SubBayesIsBad}
Let for an object $\left\langle X, Y, \Theta, p:X\times Y \times \Theta \rightarrow\mathbb{R}\right\rangle$  a model $\theta^*$ and a strategy $q_X^*$~exist such that
	\begin{equation} \label{NearOptStrategy}
			q_X^*= \argmin_{q_X \in Q_X}[R_X(q_X,\theta^*)-\min_{q_X' \in Q_X}R_X(q_X',\theta^*)],
		\end{equation} 
	\begin{equation} \label{NearOptModel}
			\theta^* = \argmax_{\theta \in \Theta}[R_X(q_X^*,\theta)-\min_{q_X' \in Q_X}R_X(q_X',\theta)].
		\end{equation} 
	Then the inequality
	\begin{equation} \nonumber 
		\max_{\theta \in \Theta}[R(q,\theta)-\min_{q_X \in Q_X}R(q_X,\theta)] \geq \max_{\theta \in \Theta}
	[R(q_X^*,\theta)-\min_{q_X \in Q_X}R(q_X,\theta)] 
	\end{equation}
	holds for any learning data $\left\langle Z,  p_Z:Z \times \Theta \rightarrow \mathbb{R} \right\rangle$ and any strategy 
		$q \in Q$.
\end{theorem}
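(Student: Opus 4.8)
The plan is to mirror the proof of Theorem \ref{ThMinMaxIsBad}, replacing the risk $R(q,\theta)$ everywhere by the deviation $R(q,\theta)-\min_{q_X \in Q_X}R(q_X,\theta)$. First I would fix an arbitrary strategy $q \in Q$ and push its worst-case deviation downward, first to its value at the distinguished model $\theta^*$ and then to the minimum over all strategies:
$$\max_{\theta \in \Theta}\Bigl[R(q,\theta)-\min_{q_X \in Q_X}R(q_X,\theta)\Bigr] \ge R(q,\theta^*)-\min_{q_X \in Q_X}R(q_X,\theta^*) \ge \min_{q \in Q}\Bigl[R(q,\theta^*)-\min_{q_X \in Q_X}R(q_X,\theta^*)\Bigr].$$
Since the subtracted term does not depend on the minimizing strategy, the rightmost quantity equals $\min_{q \in Q}R(q,\theta^*)-\min_{q_X \in Q_X}R(q_X,\theta^*)$, which by the reduction equality (\ref{Reduction}) is exactly zero. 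Thus the left-hand side of the claimed inequality is nonnegative for every $q \in Q$.

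Next I would show that the right-hand side equals zero, so that the desired inequality reduces to ``nonnegative $\ge 0$''. Condition (\ref{NearOptStrategy}) states that $q_X^*$ minimizes the deviation at $\theta^*$; because the subtracted constant $\min_{q_X' \in Q_X}R_X(q_X',\theta^*)$ is irrelevant to the argmin, this is the same as $q_X^*=\argmin_{q_X \in Q_X}R(q_X,\theta^*)$, whence $R(q_X^*,\theta^*)-\min_{q_X \in Q_X}R(q_X,\theta^*)=0$. Condition (\ref{NearOptModel}) states that $\theta^*$ maximizes the deviation of $q_X^*$ over $\Theta$, so $\max_{\theta \in \Theta}[R(q_X^*,\theta)-\min_{q_X \in Q_X}R(q_X,\theta)]$ is attained at $\theta^*$ and therefore also equals zero. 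Combining the two paragraphs yields the conclusion.

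The argument is formally identical to that of Theorem \ref{ThMinMaxIsBad}, and the only points needing care are bookkeeping: I must invoke (\ref{Reduction}) to identify $\min_{q \in Q}R(q,\theta^*)$ with $\min_{q_X \in Q_X}R(q_X,\theta^*)$, I must keep the $q$-independent term $\min_{q_X \in Q_X}R(q_X,\theta^*)$ outside the inner minimization, and I should record that $R$ and $R_X$ coincide on $Q_X$ so that the hypotheses (\ref{NearOptStrategy})--(\ref{NearOptModel}) match the form used in the chain. There is no genuine obstacle here; the substance is recognizing that the saddle-point hypotheses on the deviation functional force both sides to collapse to zero, exactly paralleling how the hypotheses of Theorem \ref{ThMinMaxIsBad} pin down the minimax value.
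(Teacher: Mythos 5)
Your proposal is correct and matches the paper's intent: the paper's proof of this theorem is literally the remark that it ``does not differ from the proof of Theorem \ref{ThMinMaxIsBad}'', i.e.\ the same chain with $R(q,\theta)$ replaced by the deviation $R(q,\theta)-\min_{q_X \in Q_X}R(q_X,\theta)$, which is exactly what you carry out. Your additional observation that both sides collapse to zero (via the reduction equality (\ref{Reduction}) and the two argmin/argmax hypotheses) is just an explicit restatement of the endpoints of that chain, and it also explains the paper's follow-up remark that under these hypotheses $q_X^*$ attains the minimal risk at every model.
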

\begin{proof}
In fact, proof of the theorem does not differ from the proof of the Theorem \ref{ThMinMaxIsBad}.
\end{proof}
However, the consequences of this theorem for minimax deviation strategies are not so destructive as those of Theorem~\ref{ThMinMaxIsBad} for minimax strategies.
In fact, conditions~(\ref{NearOptStrategy}) and~(\ref{NearOptModel}) imply that a strategy $q_X^* \in Q_X$ exists that does not use learning information and assures minimal possible risk for each model, 
\begin{equation} \nonumber 
 		R(q_X^*,\theta)=\min_{q_X \in Q_X}R(q_X,\theta)  \text{   for all  } \theta \in \Theta.
\end{equation}
In this case, any learning data are needless and has to be omitted by any strategy.\\
\\
Evidently, minimax deviation strategy is not improper and, consequently, is Bayessian. The following theorem shows how the corresponding weight function has to be obtained.
\begin{theorem}
Minimax deviation strategy 
$$q^*=\argmin_{q \in Q}\max_{\theta \in \Theta}\big[R(q, \theta) - \min_{q_X \in Q_X}R(q_X, \theta)\big]$$
is a Bayesian strategy
$ \argmin\limits_{q \in Q} \sum\limits_{\theta \in \Theta}{\tau^*(\theta) R(q, \theta) } $
with respect to weight function
\begin{equation}\label{tauDef}
\tau^* = \arg \max\limits_{\tau \in T}  \left[ \min\limits_{q\in Q} \sum\limits_{\theta \in \Theta} \tau(\theta) R(q, \theta) - 
                  \sum\limits_{\theta \in \Theta} \tau(\theta)    \min\limits_{q_X \in Q_X} R(q_X, \theta) \right].
\end{equation}
\end{theorem}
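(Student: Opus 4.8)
The plan is to recognize the minimax deviation problem as a saddle-point problem over the same two convex compact sets $Q$ and $T$ that appear in Theorem~\ref{dichotomyThrm}, and then to read off the Bayesian property from the saddle-point inequality. Write $D(q,\theta) = R(q,\theta) - \min_{q_X \in Q_X} R(q_X,\theta)$ for the deviation; by the reduction equality~(\ref{Reduction}) this equals $R(q,\theta) - \min_{q \in Q} R(q,\theta)$, so Definition~\ref{mindevDef} indeed describes $q^* = \argmin_{q \in Q} \max_{\theta \in \Theta} D(q,\theta)$. The first step is to replace the maximum over the finite set $\Theta$ by a maximum over weight functions: since $\sum_{\theta} \tau(\theta) D(q,\theta)$ is a convex combination of the numbers $D(q,\theta)$, its maximum over the simplex $T$ is attained at a vertex, giving $\max_{\theta \in \Theta} D(q,\theta) = \max_{\tau \in T} \sum_{\theta \in \Theta} \tau(\theta) D(q,\theta)$ for every $q$.

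Next I would introduce $G(\tau,q) = \sum_{\theta \in \Theta} \tau(\theta) D(q,\theta) = \sum_{\theta} \tau(\theta) R(q,\theta) - \sum_{\theta} \tau(\theta) \min_{q_X \in Q_X} R(q_X,\theta)$ and observe that, exactly as in the proof of Theorem~\ref{dichotomyThrm}, $G$ is linear in $q$ for fixed $\tau$ and linear in $\tau$ for fixed $q$, while $Q$ and $T$ are closed convex sets. The same duality theorem cited there then yields a saddle point and the equality $\min_{q \in Q} \max_{\tau \in T} G(\tau,q) = \max_{\tau \in T} \min_{q \in Q} G(\tau,q)$. By the first step the outer problem $\argmin_{q} \max_{\tau} G(\tau,q)$ is precisely the minimax deviation strategy $q^*$, and the inner problem $\argmax_{\tau} \min_{q} G(\tau,q)$ is precisely the weight function $\tau^*$ of~(\ref{tauDef}), because the subtracted term $\sum_{\theta} \tau(\theta) \min_{q_X} R(q_X,\theta)$ does not depend on $q$ and therefore passes unchanged through the inner minimization.

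Once the minimax equality holds, I would verify that the separately defined $q^*$ and $\tau^*$ form a genuine saddle point. Writing $v$ for the common value, one has $\max_{\tau} G(\tau,q^*) = v$ and $\min_{q} G(\tau^*,q) = v$, whence $G(\tau^*,q^*) \le v$ and $G(\tau^*,q^*) \ge v$, so $G(\tau^*,q^*) = v$, and consequently $G(\tau^*,q) \ge G(\tau^*,q^*)$ for all $q \in Q$. The decisive observation is again that the subtracted term in $G(\tau^*,q)$ is independent of $q$, so this inequality is equivalent to $\sum_{\theta} \tau^*(\theta) R(q,\theta) \ge \sum_{\theta} \tau^*(\theta) R(q^*,\theta)$ for all $q$, that is, $q^* = \argmin_{q \in Q} \sum_{\theta} \tau^*(\theta) R(q,\theta)$, which is exactly the Bayesian property of Definition~\ref{bayessianDef} with weight function $\tau^*$.

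The routine parts (bilinearity, compactness, the vertex argument for the simplex) are immediate. The step that deserves the most care is the cancellation of the $q$-independent term $\sum_{\theta} \tau(\theta) \min_{q_X} R(q_X,\theta)$: it must be tracked both through the identification of $\tau^*$ with~(\ref{tauDef}) and through the passage from the saddle-point inequality to the Bayesian minimization, since it is precisely this term that distinguishes the minimax deviation weight function from an ordinary minimax weight function while leaving the resulting Bayesian strategy unchanged.
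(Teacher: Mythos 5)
Your proposal is correct and follows essentially the same route as the paper's own proof: the same bilinear function $G(\tau,q)$ (the paper's $F$), the same replacement of $\max_{\theta\in\Theta}$ by $\max_{\tau\in T}$ over the simplex, the same appeal to the duality theorem for a saddle point, and the same cancellation of the $q$-independent term $\sum_{\theta}\tau(\theta)\min_{q_X\in Q_X}R(q_X,\theta)$ to extract the Bayesian property. The only differences are that you spell out two steps the paper leaves implicit (the vertex argument on the simplex, and the verification that the separately defined $\argmin$ and $\argmax$ indeed form a saddle point), which is fine.
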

\begin{proof}
Let us define a function $F:T\times Q \rightarrow \mathbb{R}$,
$$F(\tau, q) = \sum\limits_{\theta \in \Theta} \tau(\theta) R(q, \theta) - 
               \sum\limits_{\theta \in \Theta} \tau(\theta)  \min\limits_{q_X \in Q_X} R(q_X, \theta)$$
and express $q^*$ and $\tau^*$ in terms of $F$,
$$q^*=\argmin_{q \in Q}\max_{\theta \in \Theta}\big[R(q, \theta) - \min_{q_X \in Q_X}R(q_X, \theta)\big]$$
$$=\argmin_{q \in Q}\max_{\tau \in T}\sum_{\theta \in \Theta}\tau(\theta)\big[R(q, \theta) - \min_{q_X \in Q_X}R(q_X, \theta)\big]
= \argmin_{q\in Q}\max_{\tau \in T} F(\tau, q),$$
$$\tau^* = \arg\max_{\tau\in T}\min_{q\in Q} F(\tau, q).$$
The function $F$ is a linear function of $q$ for fixed $\tau$ and a linear function of $\tau$ for fixed $q$ and 
is defined on a Cartesian product of two closed convex sets $T$ and $Q$. 
In such case a pair $(\tau^*, q^*)$ is a saddle point \cite{borwein, boyd, hiriart}, 
$$\min_{q\in Q}\max_{\tau \in T} F(\tau, q) = F(\tau^*, q^*) = \max_{\tau\in T}\min_{q\in Q} F(\tau, q),$$
that implies $F(\tau^*, q^*) = \min\limits_{q \in Q} F(\tau^*, q)$ and 
$$
\begin{aligned}
q^* &= \arg\min_{q\in Q} F(\tau^*, q) = \\
	&= \arg\min_{q\in Q} \left[\sum\limits_{\theta \in \Theta} \tau^*(\theta) R(q, \theta) - 
                      \sum\limits_{\theta \in \Theta} \tau^*(\theta)  \min\limits_{q_X \in Q_X} R(q_X, \theta)\right] =\\
           &= \arg\min_{q\in Q} \sum\limits_{\theta \in \Theta} \tau^*(\theta) R(q, \theta).
\end{aligned}
$$
\end{proof}
In such way developing minimax deviation strategy is reduced to calculating weights $\tau(\theta )$ of models that maximize concave function (\ref{tauDef}). In described below experiments general purpose methods of non-smooth optimization \cite{shorMethBook} were used.

\section{Experiments}
Minimax deviation strategies have been built for objects considered in Introduction in Examples \ref{example1} and \ref{example2}. Minimax deviation strategies have been compared with maximum likelihood and minimax strategies. Results are presented on Figures \ref{figure1_exp} and \ref{figure2_exp} that show risk $R(q,\theta)$ of the strategies as a function of a model for several learning sample sizes. 
Figure \ref{figure1_exp} relates to Example \ref{example1} and Figure \ref{figure2_exp} to  Example \ref{example2}.

\begin{figure}[h!]
\begin{tabular}{c c}
  \includegraphics*[width=0.5\textwidth]{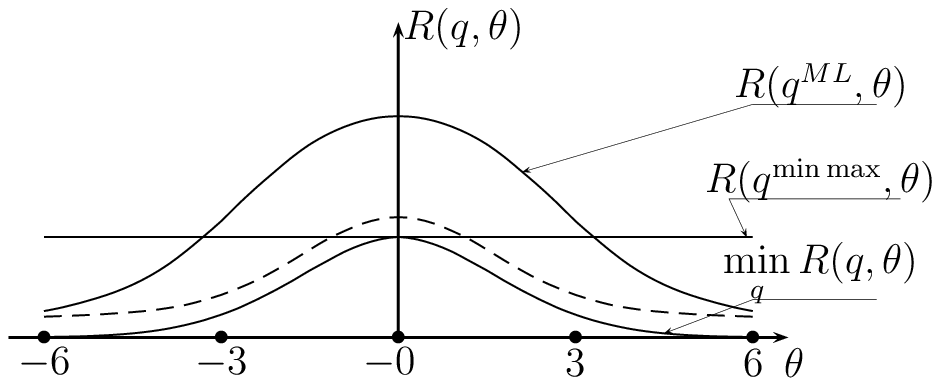} & \includegraphics*[width=0.5\textwidth]{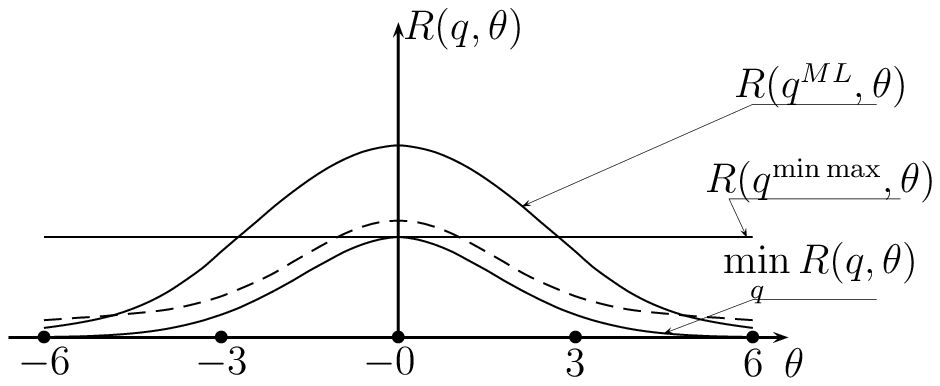} \\
  $n=1$ & $n=2$ \\
  \\
  \includegraphics*[width=0.5\textwidth]{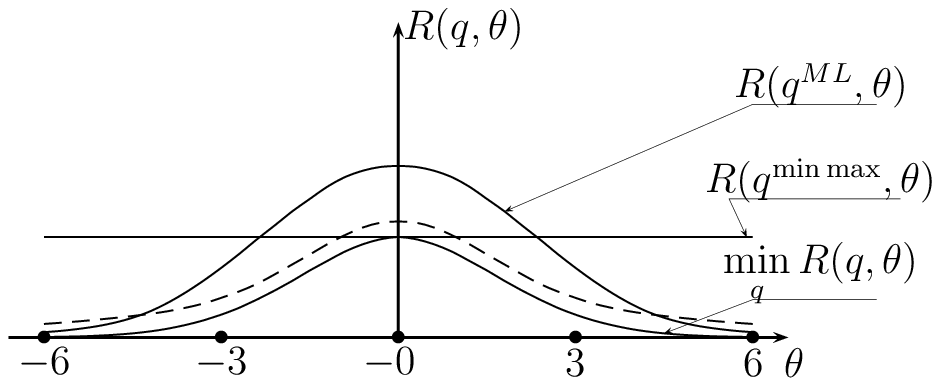} & \includegraphics*[width=0.5\textwidth]{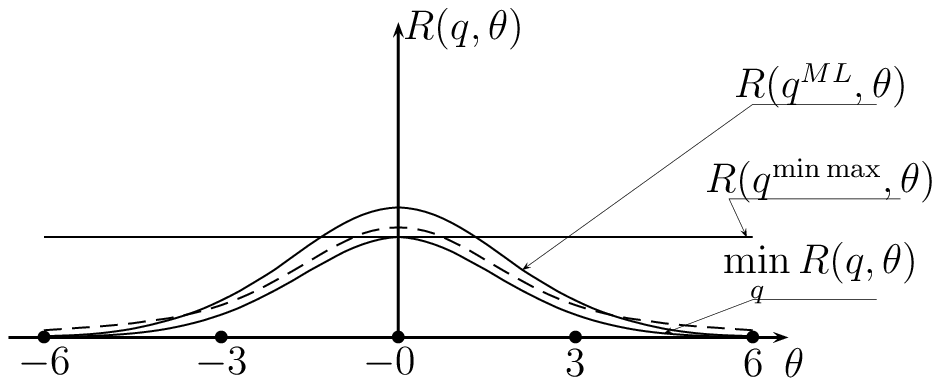} \\
  $n=3$ & $n=10$ 
\end{tabular}
\caption{Example \ref{example1}. Probability of making a wrong decision for different sizes $n$ of the learning sample.
The dashed line shows the risk of a minimax deviation strategy.
The curve $R(q^{ML},\theta)$ is the risk of a maximum likelihood strategy. The curve $R(q^{minmax},\theta)$ is the risk of a minimax strategy.
The curve $\min\limits_{q}R(q,\theta)$ is the minimum possible risk for each model.}
\label{figure1_exp}
\end{figure}

\begin{figure}[h!]
\begin{tabular}{c c}
  \includegraphics*[width=0.5\textwidth]{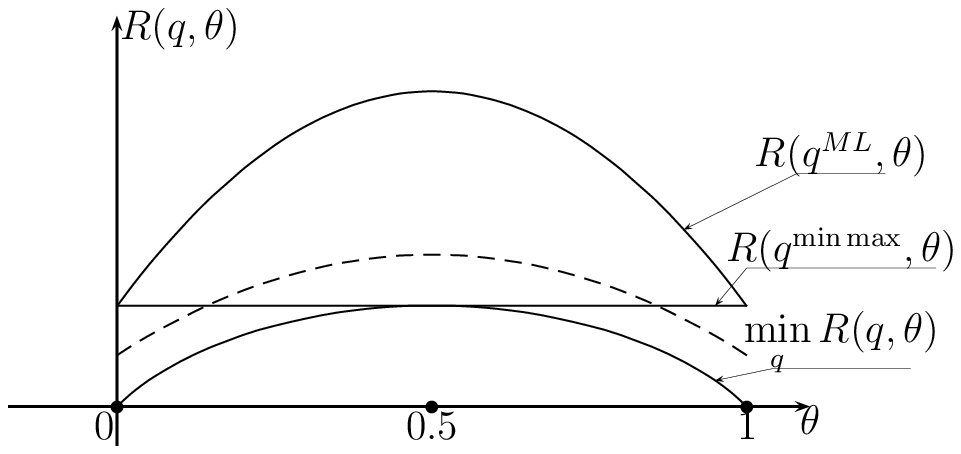} & \includegraphics*[width=0.5\textwidth]{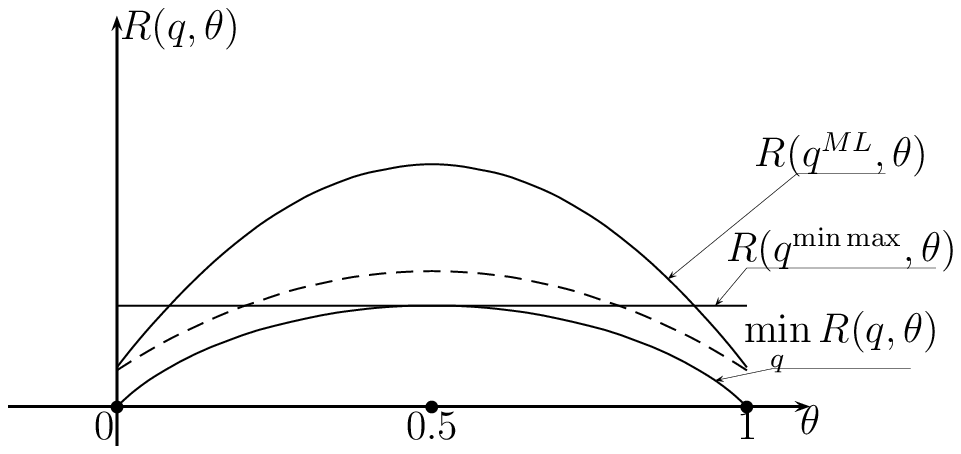} \\
  $n=1$ & $n=2$ \\
  \\
  \includegraphics*[width=0.5\textwidth]{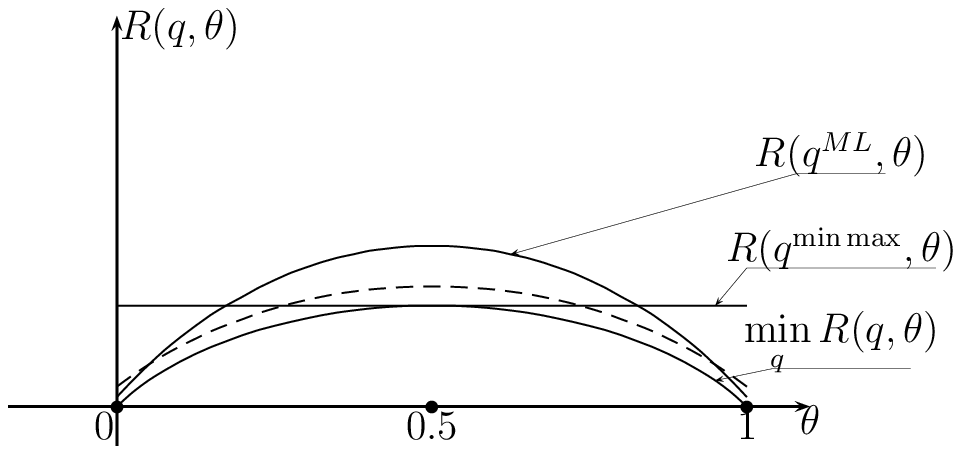} & \includegraphics*[width=0.5\textwidth]{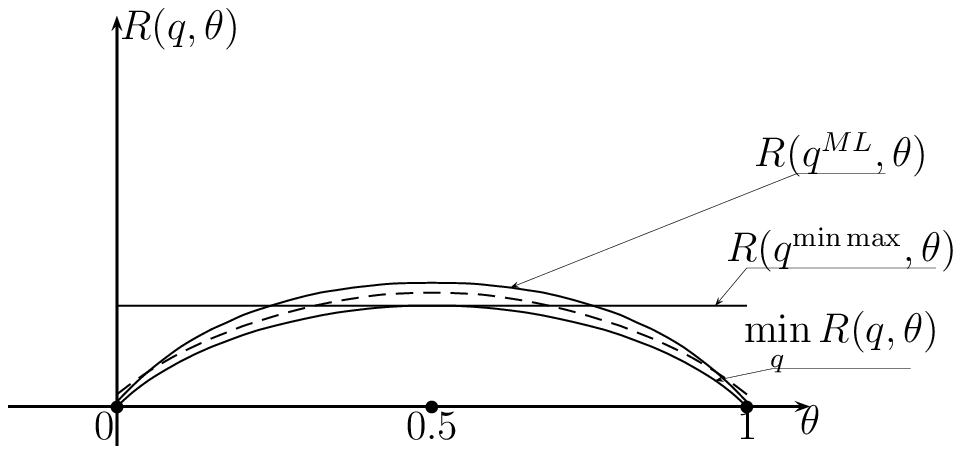} \\
  $n=5$ & $n=10$ 
\end{tabular}
\caption{Example \ref{example2}. Probability of making a wrong decision for different sizes $n$ of the learning sample. 
The dashed line shows the risk of a minimax deviation strategy.
The curve $R(q^{ML},\theta)$ is the risk of a maximum likelihood strategy. The curve $R(q^{minmax},\theta)$ is the risk of a minimax strategy.
The curve $\min\limits_{q}R(q,\theta)$ is the minimum possible risk for each model.}
\label{figure2_exp}
\end{figure}

\section{Conclusion}
The paper analyzes the problem when for given object 
$$\big\langle X,  Y, \Theta, \; p_{XY}: X \times Y \times \Theta \rightarrow \mathbb{R} \big\rangle,$$ 
loss function $w:Y \times Y \to R$, learning data source $\big\langle Z, \; p_Z:Z\times \Theta \rightarrow \mathbb{R} \big\rangle,$ observed current signal $x$ and available learning data $z$ a decision $y^*$ about current hidden state $y$ has to be made. 
The wide class of commonly used strategies make the decision of a form 
\begin{equation} \label{BadDecision}
y^* = \argmin_{y' \in Y}\sum_{y \in Y} p_{XY}(x,y;\theta^{est}(z))w(y,y')
\end{equation}
where $\theta^{est}:Z \to \Theta $ is a reasonable estimating a model $\theta$ based on learning data $z$. It means 
that the learning data are used to choose a single best model and the objects are recognized as if this best model equals the true model. The approach is acceptable if learning data are arbitrarily long learning samples and estimator $\theta^{est}:Z \to \Theta $ is consistent. 
If the learning information has a fixed format, for example, is a learning sample of limited size then the approach gives no guarantee for subsequent recognition. Indeed, the approach is not deduced from any risk-oriented requirement. Reasonable requirement  to the quality of post-learning recognition implies the decision of the form
\begin{equation} \label{GoodDecision}       
y^* = \argmin_{y' \in Y}\sum_{\theta \in \Theta}\tau(\theta)p_Z(z;\theta)\sum_{y \in Y} p_{XY}(x,y;\theta)w(y,y')
\end{equation}
that differs from (\ref{BadDecision}).  Moreover, any decision that differs from (\ref{GoodDecision}) can be replaced with a decision of the form (\ref{GoodDecision}) with the better recognition quality.\\
\\
There is nothing in decision (\ref{GoodDecision}) that could be treated as a selecting some best model of the model set and so no question stands what estimator $\theta^{est}:Z \to \Theta $ has to be used. No model has to be selected, on the contrary, all models have to take part in decision with their weights. It is essential that the weights do not depend on learning data, they are determined by requirement to searched strategy for concrete applied situation. The paper shows a way for computing these weights for minimax deviation strategy that is appropriate for learning samples of any length and in such way fills a gap between maximum likelihood and minimax startegies.\\
\\
Minimax deviation strategy is not at all a single strategy that is reasonable in such or other application. Many other strategies are appropriate too, for example, a strategies of the form 
\begin{equation} \label{OtherStrategies}
\argmin_{q \in Q}\max_{\theta \in \Theta}\frac{R(q,\theta)-\alpha(\theta)}{\beta(\theta)}
\end{equation}
with predefined numbers $\alpha(\theta)$ and $\beta(\theta)>0$. Minimax strategy is a special case of (\ref{OtherStrategies}) when $\alpha(\theta)=0$, $\beta(\theta)=1$, minimax deviation strategy is a case when 
$\alpha(\theta)=\min_{q_\in Q}R(q,\theta)$, $\beta(\theta)=1$. A reasonable modification of minimax deviation strategy is a case when $\alpha(\theta)=0$, $\beta(\theta)=\min_{q_\in Q}R(q,\theta)$. The numbers $\alpha(\theta)$ may be risks of some already developed strategy and this is a case when the developer wants to check whether the better strategy is possible. At last, numbers $\alpha(\theta)$ may be simply desired values of risks in concrete applied situation. \\
\\
Requirements of the form (\ref{OtherStrategies}) together with various loss functions determine various applied situations and obtained results show the way to cope with all them. It has become quite clear now that each strategy of the form (\ref{OtherStrategies}) may be represented in the form (\ref{GoodDecision}) because, obviously, no of them is improper. Obtained results imply unexpected conclusion that learning data take part in a decision (\ref{GoodDecision}) in a unified form that depends neither on applied situation nor on recognized object. So, no question stands more how the learning data have to influence the decision about current state when the current signal is observed. Learning data influence the decision via and only via probabilities $p_Z;(z;\theta)$, not via choise of some best model of the model set.

 \end{document}